\documentclass[article]{elsarticle} 
\usepackage{lineno,hyperref}
\usepackage{amsthm,amsfonts,amssymb,amsmath,mathrsfs}
\usepackage{mathtools}
\usepackage{pifont}
\usepackage{latexsym}
\usepackage{enumitem}
\usepackage{algorithm}
\usepackage{algorithmic}
\usepackage{diagbox}
\usepackage{multirow}
\newtheorem{theorem}{Theorem}
\newtheorem{lemma}[theorem]{Lemma}

\journal{Journal of \LaTeX\ Templates}









\bibliographystyle{elsarticle-num}

\makeatletter
\def\ps@pprintTitle{%
	\let\@oddhead\@empty
	\let\@evenhead\@empty
	\def\@oddfoot{\centerline{\thepage}}%
	\let\@evenfoot\@oddfoot}
\makeatother
\begin{document}

\begin{frontmatter}

\title{A novel extension of Generalized Low-Rank Approximation of Matrices based on multiple-pairs of transformations
}


\author[mymainaddress]{Soheil Ahmadi}

\author[mymainaddress]{Mansoor Rezghi\corref{mycorrespondingauthor}}
\cortext[mycorrespondingauthor]{Corresponding author}
\ead{rezghi@modares.ac.ir}

\address[mymainaddress]{Department of Computer Science, Tarbiat Modares University, Tehran, Iran}

\begin{abstract}
	
Dimensionality reduction is a main step in the learning process which plays an essential role in many applications. The most popular methods in this field like SVD, PCA, and LDA,  only can be applied to data with vector format. This means that for higher order data like matrices or more generally tensors, data should be fold to the vector format.
So, in this approach, the spatial relations of features are not considered and also the probability of over-fitting is increased. Due to these issues, in recent years some methods like Generalized low-rank approximation of matrices (GLRAM) and  Multilinear PCA (MPCA) are proposed which deal with the data in their own format. So, in these methods, the spatial relationships of features are preserved and the probability of overfitting could be fallen.
Also, their time and space complexities are less than vector-based ones. However, because of the fewer parameters, the search space in multilinear approach is much smaller than the search space of the vector-based approach. To overcome this drawback of multilinear methods like GLRAM, we proposed a new method which is a general form of GLRAM and by preserving the merits of it have a larger search space. Experimental results confirm the quality of the proposed method.
Also, applying this approach to the other multilinear dimensionality reduction methods like MPCA and MLDA is straightforward.

\end{abstract}

\begin{keyword}
\footnotesize{Machine learning\sep Matrix data classification\sep Kronecker product\sep Dimensionality reduction\sep SVD\sep GLRAM}
\end{keyword}

\end{frontmatter}



\section{Introduction}

Machine learning (ML) is one of the most important concepts in computer science which has many applications in the real world such as face recognition\cite{0-1}, image processing\cite{0-2}, criminal recognition\cite{0-3}, medical images\cite{0-4}, computer vision\cite{2-9}, data mining\cite{2-8}, etc.
In the literature of ML,  each sample is considered to be a vector. This means that in well-known  ML algorithms like logistic regression classifier, 
SVM \cite{0-5}, LDA\cite{0-6}, PCA\cite{0-7}, SVD\cite{0-8} and others,  
the input data in other formats like matrix or tensor should be folded to the vector format.


This folding causes two major problems. At the first, by converting a matrix or a tensor to a long (wide) vector, the number of free variables of any learning model will be increased sharply, which can make overfitting in the model.
Also, by vectorizing the spatial relationships of features for some data like images and videos are not
considered. In other words, each datum treated individually\cite{0-9}. For example, a grayscale image represented by $m\times n$ matrix in this approach will be reshaped to a vector with the size $nm$. Therefore, not only we have many free variables but also the local spatial relations among pixels of images are not considered.

After a while in order to tackle mentioned drawbacks of vector-based methods, another approach
is known multi-linear(tensor-based) learning has been proposed which consider the data in their original multidimensional format\cite{1-1}\cite{1-2}. In this approach, 
it is not necessary to reshape the data with multidimensional format anymore, and so the spatial relationships in data will be preserved\cite{0-9}. Furthermore, in contrast with vector-based methods, these multilinear methods have much fewer variables which can reduce the computational complexity and also the probability of overfitting\cite{1-0}.
In this approach different methods like Generalized low rank approximation(GLRAM) \cite{1-8}, multilinear PCA(MPCA)\cite{1-3},
 multilinear LDA(MLDA)\cite{1-4}, support tensor machine(STM)\cite{1-5}, have been investigated as tensor counterparts of SVD, PCA, LDA, SVM, respectively\cite{1-6}\cite{2-3}.
 
Despite the mentioned appropriate properties, these methods have some problems, either. 
The main problem of tensor-based methods is their limited search space which is only a subset of vector-based one. So, the probability of finding an optimal answer in such a small search space is less than the large feasible region in traditional methods by far. 
The problem of dimensionality reduction (DR) is an essential tool for removing noise\cite{3-3}, reducing redundancy\cite{3-4} and so extracting appropriate new features\cite{3-5}. 
Singular value decomposition is one of the main matrix decomposition methods that could be used as a DR method and is related to the well-known PCA method. Since in this method each data is considered as a vector, so the drawbacks of the vector-based methods in dealing with data like matrices and tensors exist for SVD, too. Generalized low-rank approximation of matrices (GLRAM) is an extension of SVD method for data samples in the matrix format, which by one pair of left and right projectors, transfers the data into a smaller subspace without folding the data into the vector format\cite{1-8}. ALso, in some data with small number of sample this methods works like SVD method even with smaller space.


In recent years some variants of this method have been proposed. For example, since at each iteration of GLRAM  two SVD should be computed, this increases the time complexity. The authors in\cite{2-7} show that instead of SVD, its approximation by Lanczos could be used which improves its speed.

Although GLRAM  preserve the spatial relationships of features and has less complexity than SVD, its search space is smaller than SVD\cite{1-8}. 
In this paper, to overcome this drawback of GLRAM method,  we proposed a method that by applying k-pair of left and right projectors to data,
while maintaining good properties of the GLRAM method has a larger search space.  This new method will be named Multiple-Paris of GLRAM (GLRAM).
Expanding th search space of multilinear method at the first is done for STM method by 
Hou et al, in their paper\cite{1-7}. They proposed a multiple-rank multilinear SVM  for classification, that expands the search space of STM in order to gain a more accurate answer same as SVM.
Theoretically, we show that by this multi-pair projections the search space of the obtained method is increased. So the quality of approximations in this method will be better than GLRAM.  Experiments show the quality of the proposed method.

In machine learning, there is a trade-off between the number of variables and occurrence of the overfitting. Although by increasing the number of parameters the search space of the model increases but at the same time this increase the occurrence of over-fitting.
In experiments, we found that, although the search space of our proposed method becomes a  bit
larger than GLRAM,   its quality always becomes better than GLRAM.  Also, despite its low search space in comparison with SVD, almost gives better or equal results in comparison with SVD. This could be interpreted by the overfitting phenomenon.  It's clear that for SVD with larger search space, the possibility of occurrence of the overfitting is more than
our proposed method, especially for data with the larger "$\#$feature-$\#$sample" ratio.
Also, it should be mentioned that the same idea could be applied to other tensor-based dimension reduction methods like MPCA and MLDA, easily.

The rest of this paper is organized as follows. In section 2, we analyze SVD and GLRAM methods and the relationship between them.
In section 3, we present our proposed method.
Next, the experimental results will be discussed in section 4. And finally, the conclusion stated in section 5. 
\section{Related works}

In real applications, data usually contains some noisy and redundant features which affect the quality of the learning process, especially for high dimensional data. Dimensionality Reduction (DR) is a process that by transforming data into a lower dimension, tries to eliminate noise and redundancy in data\cite{1-6}\cite{1-9}. 
Therefore, the occurrence of the curse of dimensionality and other undesired properties of high-dimensional spaces will be reduced, which has an important role in many applications\cite{2-0}.
In the last decades, a large number of DR techniques with different viewpoints like PCA, SVD, Fisher LDA and so on, have been investigated. Similar to other ML methods the input of the mentioned DR methods should be in vector format and so data types like images and videos (Matrix or Tensor) should be represented as a vector. This folding of high-order data to vector not only has a high complexity but also can cause losing some important spatial relations of the features in the data.
In recent years some multilinear versions of the mentioned DR methods have been proposed which are able to work with high-order data like matrices and tensors directly without reshaping them to the vector format. For example, MPCA, GLRAM, and Multilinear LDA are the multilinear versions of PCA, SVD, and LDA, respectively.

The main advantage of the multilinear methods could be summarized as follows:
\begin{itemize}
	\item They maintain the structure and so the spatial relations in the data.
	\item The parameters of the multilinear methods are less than the vector-based ones and so the computational complexity becomes less than linear methods. Also, for multilinear methods, the probability of occurrence of overfitting will be decreased  
\end{itemize} 
However, the multilinear methods are not convex typically, and also their search space is much smaller than linear ones. 
In this paper, we expand the search space of GLRAM, which can cause to gain better results. It should be mentioned that this approach could be applied to other multilinear DR methods, too. In the following we review some linear and multilinear DR methods.
\subsection{Linear DR methods based on low-rank approximation}

PCA and SVD are the main DR methods that are related to each other. Let $X=[x_1,..., x_N]\in \mathbb{R}^{n\times N}$ be a centralized data set. The principal component analysis (PCA) project the data from $\mathbb{R}^n$ to $\mathbb{R}^d, (d\ll n)$, by orthogonal transformation $W\in \mathbb{R}^{n\times d}$ such that the variance of the projected data $Y=W^TX$ maximized.
It is easy to show that this can be formulated as follows:
\begin{align}\label{3-0}
&\max_W \	 trace(W^TXX^TW)\\ \nonumber
&s.t. \	\	W^TW=I.
\end{align}
The $k$ eigenvectors corresponding to the $k$ large eigenvalues of $XX^T$ are the columns of the solution $W$ in Eq.\ref{3-0}. 
 Hence, if $X=U\Sigma V^T$ be the Singular Value Decomposition (SVD) of $X$, 
 the first $d$ left singular vectors $U_k=[u_1,..., u_d]$ are the $d$ first eigenvectors of $XX^T$ and so $W=U_d$ \cite{2-1}. Therefore $Y_d=U_d^TX$ becomes the projection of $X$. By SVD, it is clear that the projected data $Y_d$ becomes 

\begin{equation}
Y_d=U_d^TX= \Sigma_d V_d^T,
\end{equation}
where $\Sigma_d= diag(\sigma_1, \sigma_2,...,\sigma_d)$.\\
In addition, this projection could be interpreted with another viewpoint. Since $X_k=U_d\Sigma_d V_{d}^{T}=WY_d$, is the best rank-k approximation of $X$, so we found that $Y_d$ is a reduced form of original data $X$ such that have the gives the smallest construction error and, the PCA equals to the following problem
\begin{align}
&\min_{Y_d, W} \Vert X-WY_d\Vert_F^2\\ \nonumber
&s.t. \	\	W^TW=I.
\end{align}
Therefore, the PCA and SVD dimension reduction also could be rewritten as follows:
\begin{align}\label{3-1}
&\min_{W,Y} \sum _{i=1}^N \Vert x_i- Wy_i\Vert_F^2\\ \nonumber
&s.t. \	\	W^TW=I.
\end{align}
It should be mentioned that for general data matrix $X$, where is not centralized, the SVD on $\bar{X}=[x_1-\mu, x_2-\mu, ..., x_N-\mu]$ where $\mu$ is the mean of the data is equal to applying PCA on $X$ .

\subsection{Generalized low-rank approximations}
Nowadays by increasing the usage of matrix datasets, the SVD (PCA) could not be used directly on data. Now, If we have a dataset like $\left\lbrace A_1,..., A_N\right\rbrace$ where $A_i \in \mathbb{R}^{n_1\times n_2}$, to apply SVD (PCA), every matrix $A_i$ should be fold to a vector as follows:
\begin{equation}
a_i= Vec(A_i)=\left[ a_{1}^{i^T},a_{2}^{i^T},...,a_{n_2}^{i^T}\right] ^T,
\end{equation}
where $a_j^i$ is the $j$-th column of matrix $A_i$.
It is obvious that this folding maybe destroys some spatial relations\cite{0-9}. Figure.\ref{fig.1} shows this phenomenon.
\begin{figure}[h]
	\centering
	\includegraphics[scale=.25]{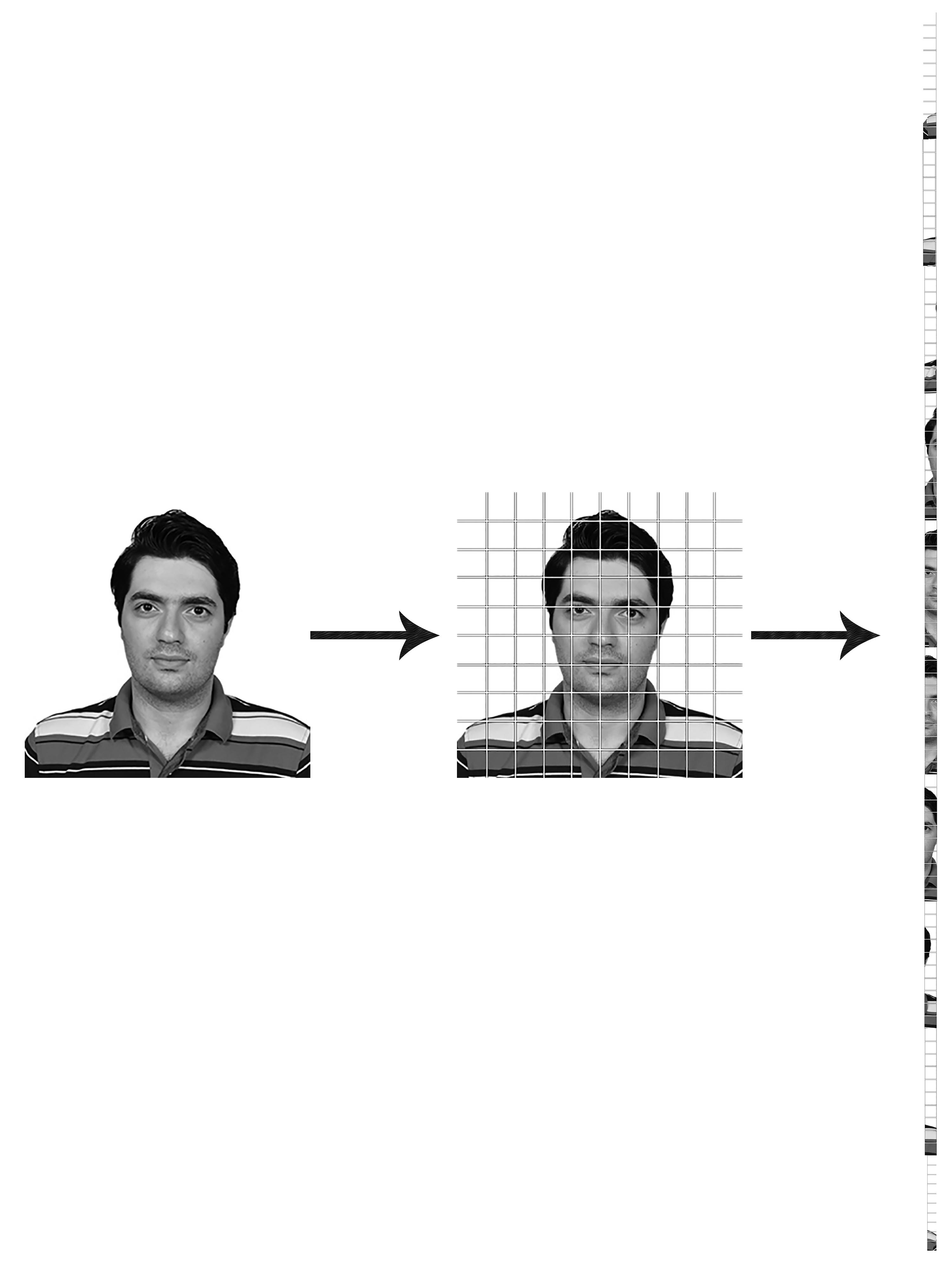}
	\caption{{\footnotesize{ Vectorizing a grayscale image
	}}}\label{fig.1}
\end{figure}

Recently, an extension of the DR based on a low-rank approximation to matrix data named Generalized Low-rank Approximation of Matrices(GLRAM) is investigated. In dimension reduction on $A_i$,
GLRAM by unknown orthogonal transformation matrices $L\in \mathbb{R}^{n_1\times k_1}$ and $R\in \mathbb{R}^{n_2\times k_2}$ looks for reduced data $D_i\in \mathbb{R}^{k_1\times k_2}$ where its reconstruction $LD_iR^T$ be the best low-rank approximation of $A_i$. Mathematically this can be modeled as follows
\begin{align} \label{1-3}
‎&\min_{\substack{L\in \mathbb{R}^{n_1\times k_1}‎ :‎\	L^TL=I_{k_1}\\ R\in \mathbb{R}^{n_2\times k_2}‎ :\	 ‎R^TR=I_{k_2}\\ D_i\in \mathbb{R}^{k_1\times k_2}‎ : ‎i=1,2,...,n}} \	\sum_{i=1}^N \Vert A_i-LD_iR^T\Vert	_F^2. 
‎\end{align}‎

Jieping Ye in his article\cite{1-8} showed that the optimal values of $L$ and $R$ should be the solution of the following maximization problem
\begin{align}\label{2-1}
&\max_{\substack{L\in \mathbb{R}^{n_1\times k_1} :\	L^TL=I_{k_1}\\ R\in \mathbb{R}^{n_2\times k_2} :\	 R^TR=I_{k_2}}} \	\sum_{i=1}^N \Vert L^TA_iR\Vert	_F^2,
\end{align}
and the optimal value of $D_i$ is $D_i=L^TA_iR$.
So instead of solving Eq.\ref{1-3}, tried to solve Eq.\ref{2-1}.
Also, to solve Eq.\ref{2-1} an alternating schema is used and at each step, this equation is substituted with the following two subproblems according to $R$ and $L$
\begin{align}\label{3-5}
&\max_R \	trace(R^TM_RR)\nonumber \\
&s.t\ \ R^TR=I,
\end{align}
and
\begin{align}\label{3-6}
&\max_L \	trace(L^TM_LL)\nonumber \\
&s.t\ \ L^TL=I,
\end{align}
where $M_R=\sum_{i=1}^{N} A_i^TLL^TA_i$ and $M_L=\sum_{i=1}^{N} A_iRR^TA_i^T$.
The optimal value of $L\in \mathbb{R}^{n_1 \times k_1}$ in Eq.\ref{3-5} and $R\in \mathbb{R}^{n_2\times k_2}$ in Eq.\ref{3-6} are the first $k_1$ and $k_2$ eigenvectors of $M_L$ and $M_R$ matrices, respectively\cite{1-8}.
The process of solving the GLRAM problem has been shown in Algorithm.\ref{alg:1.1} in detail.
\begin{algorithm}[!h]
	\caption{GLRAM}\label{alg:1.1}
	\begin{algorithmic}[1]
		\REQUIRE Input: Matrices $\lbrace A_i\rbrace _{i=1}^N$ \\
		\ENSURE Output: matrices L,R, and $\lbrace D_i\rbrace _{i=1}^N$
		\STATE Obtain initial $L_0$ and set $\textit{i}\leftarrow 1$
		\STATE While not convergent 
		\STATE \ \ \ \ Form the matrix $M_R=\sum_{j=1}^{n} A_j^TL_{i-1}L_{i-1}^TA_j$
		\STATE \ \ \ \ Compute the $k_2$ eigenvectors $\{\phi_j^R\}, j=1,\ldots,k_2$ of $M_R$ \\
		\ \ \ \ corresponding to the largest $k_2$ eigenvalues 
		\STATE \ \ \ \ $R_i \leftarrow[\phi _1^R,...,\phi _{k_2}^R]$	
		\STATE \ \ \ \ Form the matrix $M_L=\sum_{j=1}^{n} A_jR_iR_i^TA_j^T$
		\STATE \ \ \ \ Compute the $k_1$ eigenvectors $\{\phi _j^L\},{j=1, \ldots,k_1}$ of $M_L$ \\
		\ \ \ \ corresponding to the largest $k_1$ eigenvalues
		\STATE \ \ \ \ $L_i \leftarrow[\phi _1^L,...,\phi _{k_1}^L]$
		\STATE \ \ \ \ $\textit{i} \leftarrow \textit{i}+1$	
		\STATE EndWhile
		\STATE $L\leftarrow L_{i-1}$
		\STATE $R\leftarrow R_{i-1}$
		\STATE For j from 1 to n 
		\STATE     \ \ \ \ $D_j \leftarrow L^TA_jR$
		\STATE EndFor
	\end{algorithmic}
\end{algorithm}
Due to heavy computation for obtaining each eigenvalue, particularly in a deal with large data, the time and computational complexity will be increased which cause dire problems. Then, a variant of this method has been proposed named Bilinear Lanczos components (BLC)\cite{2-7} by using Lanczos method, operate faster than getting eigenvalues exactly.

\section{Low rank Approximation of matrices based on multiple-pairs of left and right transformations on matrix Samples}
In this section, we investigate the benefits and drawbacks of GLRAM method over SVD and based on this investigation propose a method that by preserving the benefits of GLRAM method try to cover its drawbacks. To start we analyze the relation between GLRAM and SVD.

Consider the data becomes $\lbrace A_1,...,A_N\rbrace$, where each  $A_i\in \mathbb{R}^{n_1\times n_2}$. For reduction of each sample to data $D_i\in \mathbb{R}^{k_1\times k_2}$, the objective  function of GLRAM method is Eq.\ref{1-3} with variables $L\in \mathbb{R}^{n_1\times k_1},R \in \mathbb{R}^{n_2\times k_2}$. But SVD works on vectorization $a_i=vec(A_i)\in \mathbb{R}^n$ of each sample. Here, the objective function of SVD (best low-rank approximation) on these data becomes
\begin{align} \label{3-3}
‎&\min_W \	\sum_{i=1}^N \Vert a_i-Wy_i\Vert	_F^2, \   \   \   \   \  \  n=n_1n_2, \ d=k_1k_2, \\‎
‎&W\in \mathbb{R}^{n\times d}, \ W^TW=I.‎\nonumber\\‎
‎&y_i\in \mathbb{R}^{d},‎\nonumber 
‎\end{align}‎
By using the properties of Kronecker product\cite{2-4} and 
vectorization, it is easy to show that GLRAM model in  Eq.\ref{1-3},  is equal to the following form
\begin{align} \label{3-4}
‎&\min_{L,R,D_{i}} \	\sum_{i=1}^N \Vert a_i-(L\otimes R)d_i\Vert	_F^2.\\‎
‎&L\in \mathbb{R}^{n_1\times k_1}‎ :‎\	L^TL=I_{k_1}\nonumber\\‎
‎&R\in \mathbb{R}^{n_2\times k_2}‎ :\	 ‎R^TR=I_{k_2}.\nonumber 
‎\end{align}‎
Now, by comparison between the Eq.\ref{3-3} and Eq.\ref{3-4}, The following benefits of GLRAM over SVD could be understood:
\begin{itemize}
\item	GLRAM works directly on data with their own format without folding them into vectors.
\item In GLRAM we are free to choose the amount reduction in each arbitrary mode,which is not meaningful in SVD. 
	\item GLRAM has $n_1k_1+n_2k_2$ parameters that should be estimated, while there are $n_1n_2k_1k_2$ ones for SVD. So  the complexity  and so the  possibility of overfitting in GLRAM is much less than SVD.	
\end{itemize}
Although, fewer parameters in GLRAM has the mentioned benefits, we show that this causes GLRAM has the smaller search space over the SVD.
To show this fact mathematically, GLRAM objective function has could be rewritten as follows
\begin{align}\label{3-7}
\sum_{i=1}^{N}\Vert A_i - LD_iR^T\Vert_F^2
=\sum_{i=1}^{N}\Vert a_i-(L\otimes R)d_i\Vert_2^2. 
\end{align}
So,
\begin{align}\label{4-3}
\Phi =\left\lbrace V \vert V=L\otimes R,  L\in \mathbb{R}^{n_1\times k_1} \ and\  R\in \mathbb{R}^{n_2\times k_2}\ are\  orthogonal \right\rbrace,
\end{align}
 denotes the search space of GLRAM. But the search space of SVD method is
\begin{align}\label{4-33}
\Psi =\left\lbrace U \vert  U \in \mathbb{R}^{n\times d} \ is \ an\ orthogonal\ matrix\,\  n=n_1n_2, d=k_1k_2\right\rbrace .
\end{align}
Therefore, it is clear that $\Phi \subset \Psi$ and the search space of GLRAM is a subset of the search space of the SVD method. 
We can see the summary of comparisons between vector-based methods and tensor-based ones in Table.\ref{tab.1}.
\begin{small}
	\begin{table}[h]
		\centering
		\caption{\footnotesize{
				Comparison between vector-based and tensor-based  methods
		}}\label{tab.1}
		\begin{tabular}{| c || c | c |}
			\hline
			Methods & Vector-Based & Tensor-Based \\
			\hline
			\hline
			Search space & Large & Small  \\
			\hline
			Complexity & High & Low \\
			\hline
			Spatial Relationship & Ignored & Considered\\
			\hline
		\end{tabular}
	\end{table}
\end{small}
\subsection{Proposed method by multi-pair of projections}
In this section, we try to extend the search domain of GLRAM in order to improve its quality without losing its aforesaid advantages.
To design our proposed method, we should have new insight to search region $\Psi$ of SVD method.

\begin{lemma}
Consider  $W\in \mathbb{R}^{n\times k}$ be a solution of SVD  model in Eq.\ref{3-3} applied on vectorization $\left\{a_i=vec(A_i)\right\}$ of data samples $\{A_i\in \mathbb{R}^{n_1\times n_2}\}, i=1,\ldots,N$ and $n=n_1n_2$ to reduce them to vectors $\{y_i\in \mathbb{R}^{d}\}$, where $d=k_1k_2$. Depended to the data  there exists an integer number $l \leq \min\{n_1k_1,n_2k_2\}$ such that, $W$ could be rewritten as the following form
\[
W=\sum_{j=1}^{l}L_j \otimes R_j
\] 
where  $L_j\in \mathbb{R}^{n_1\times k_1}, R_j \in \mathbb{R}^{n_2\times k_2} $ for $ j=1,\ldots, l$ .
\end{lemma}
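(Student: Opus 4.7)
The plan is to reduce the problem to a standard result on the so-called Kronecker (or Van Loan--Pitsianis) rearrangement: there is a linear bijection $\mathcal{R}:\mathbb{R}^{n_1n_2\times k_1k_2}\to \mathbb{R}^{n_1k_1\times n_2k_2}$ defined by block-reshaping that maps each elementary Kronecker product $L\otimes R$ to the rank-one matrix $\mathrm{vec}(L)\,\mathrm{vec}(R)^T$. Granted this identity, writing $W=\sum_j L_j\otimes R_j$ is equivalent, after applying $\mathcal{R}$, to writing $\mathcal{R}(W)$ as a sum of rank-one matrices $\mathrm{vec}(L_j)\,\mathrm{vec}(R_j)^T$. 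Hence the minimal number of terms $l$ is exactly the ordinary matrix rank of $\mathcal{R}(W)$, which is at most $\min\{n_1k_1,\,n_2k_2\}$.

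Concretely, I would first recall (or verify by index chasing) the block-reshaping identity $\mathcal{R}(L\otimes R)=\mathrm{vec}(L)\,\mathrm{vec}(R)^T$ and its linearity. Next I would apply the thin SVD to $\mathcal{R}(W)$ to obtain $\mathcal{R}(W)=\sum_{j=1}^{l}\sigma_j u_j v_j^T$ with $l=\mathrm{rank}(\mathcal{R}(W))\le \min\{n_1k_1,n_2k_2\}$, where $u_j\in\mathbb{R}^{n_1k_1}$ and $v_j\in\mathbb{R}^{n_2k_2}$. I would then define $L_j\in\mathbb{R}^{n_1\times k_1}$ and $R_j\in\mathbb{R}^{n_2\times k_2}$ by reshaping $\sqrt{\sigma_j}\,u_j$ and $\sqrt{\sigma_j}\,v_j$ respectively, so that $\mathrm{vec}(L_j)\mathrm{vec}(R_j)^T=\sigma_j u_jv_j^T$. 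Summing and applying the inverse rearrangement $\mathcal{R}^{-1}$ (which is also linear) yields $W=\sum_{j=1}^{l} L_j\otimes R_j$, as desired.

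The main obstacle is bookkeeping rather than substance: one must fix conventions for $\mathrm{vec}$ and for how $W$'s entry $W_{(i_1,i_2),(j_1,j_2)}$ is mapped under $\mathcal{R}$ to an entry indexed by $(i_1,j_1),(i_2,j_2)$ so that the identity $\mathcal{R}(L\otimes R)=\mathrm{vec}(L)\mathrm{vec}(R)^T$ holds on the nose, and then confirm that $\mathcal{R}$ is a bijection of the two ambient spaces (same dimension $n_1n_2k_1k_2$, injective by the identity above extended by linearity). Once this rearrangement lemma is in place, the statement follows immediately from the SVD bound on the rank of $\mathcal{R}(W)$, and the integer $l$ in the claim is simply this Kronecker rank, which depends on $W$ and hence on the data through which $W$ was obtained.
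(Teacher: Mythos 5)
Your proposal is correct and follows essentially the same route as the paper: the operator $\mathcal{R}$ you describe is exactly the block-reshaping $\tilde{W}\in\mathbb{R}^{n_1k_1\times n_2k_2}$ of Eq.~\ref{4-2} (the Van Loan--Pitsianis rearrangement), and both arguments then take the SVD of this rearranged matrix, absorb $\sqrt{\sigma_j}$ into each singular vector pair, and reshape back to obtain $W=\sum_{j=1}^{l}L_j\otimes R_j$ with $l=\mathrm{rank}(\tilde{W})\le\min\{n_1k_1,n_2k_2\}$. Your version is somewhat more careful in making the linearity and bijectivity of the rearrangement explicit, but the substance is identical.
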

\begin{proof}
When SVD model Eq.\ref{3-3} is applied on data samples $\{A_i\in \mathbb{R}^{n_1\times n_2}\}$, according to Eq.\ref{4-33}, the solution will be lie on the feasible set $\Psi$ and each feasible solution will be an orthogonal matrix $W \in \mathbb{R}^{n\times d}$, where $n=n_1n_2$ and $d=k_1k_2$. Now we design the following partitioning on 
matrix $W$
\begin{align}
&W=
\left(
\begin{array}{c|c|c}
W _{11} &\cdots   & W _{1k_1}\\ 
\hline
\vdots &  \ddots & \vdots \\ 
\hline
W _{n_11} &\cdots   &  W _{n_1k_1}
\end{array} \right),
\end{align}
where $W$ contains $n_1k_1$ numbers of block matrices $W_{ij}\in \mathbb{R}^{n_2\times k_2}, i=1,..,n_1, \	j=1,...,k_1$. Based on this partitioning we define the follwing reshaping
\cite{3-2}
\begin{equation}\label{4-2}
\tilde{W}=[Vec(W _{1,1}),..., Vec(W _{n_1,1}),..., Vec(W _{1,k_1}),..., Vec(W _{n_1,k_1})]^T\in \mathbb{R}^{n_1k_1\times n_2k_2}.
\end{equation}
If $rank(\tilde{W})=l$ where $l \leq \min \{n_1k_1,n_2k_2\}$, 
the SVD decomposition of $\tilde{W}$ will be
\begin{equation}\label{3-8}
\tilde{W}=\sum_{i=1}^l \sigma_i u_iv _i^T.
\end{equation}
By defining $\bar{u}_i=\sqrt{\sigma_i}u _i  \in \mathbb{R}^{n_1k_1}$ and $\bar{v}_i=\sqrt{\sigma_i}v_i  \in \mathbb{R}^{n_2k_2}$ the Eq.\ref{3-8} becomes
\begin{equation}
\tilde{W}=\sum_{i=1}^l \bar{u}_i\bar{v}_i^T.
\end{equation}
By the properties of Kronecker product and definition of reshaped matrix $\tilde{W}$, it is easy to show that
\begin{equation}\label{3-9}
W=\sum_{j=1}^l L_j\otimes R_j,
\end{equation}
where $Vec(L_j)=\bar{u}_j$ and $Vec(R_j)=\bar{v}_j$\cite{2-4, 3-2}.
\end{proof}
 This shows that every projection matrix $W \in \Psi$ has a form like Eq.\ref{3-9} and so the  projection matrix of GLRAM belongs to $\Phi$ as in Eq.\ref{4-3} is a special case of Eq.\ref{3-9} when $l=1$.\\
By this relation, if we set a $1<k<l$, using the projection matrix like 
\begin{equation}\label{4-0}
W =\sum_{j=1}^k L_j\otimes R_j,
\end{equation}
enables us to use the benefits of GLRAM and SVD at the same time. This means that by the mentioned $W$ in Eq.\ref{4-0} as a projection matrix in GLRAM model, we obtained the following model which will be named Multiple-pairs of GLRAM (MPGLRAM).
\begin{align}\label{4-1}
&\min_{\{L_j, R_j\}_{j=1}^{k},
\{	D_i\}_{i=1}^{N}} \sum_{i=1}^N \Vert a_i-\sum_{j=1}^k (L_j\otimes R_j)d_i\Vert_F^2\\ \nonumber
&=\sum_{i=1}^N\Vert A_i-\sum_{j=1}^k L_jD_iR_j^T\Vert_F^2.
\end{align}
Here $d_i=Vec(D_i)$ and $D_i\in \mathbb{R}^{k_1\times k_2}$ which similar to GLRAM works on matrix data with their own format and at the same time its search space is larger than GLRAM method. 
Here, 
\[\overline{\Phi}_k=\left\lbrace \sum_{j=1}^k L_j\otimes R_j \vert L_j \in \mathbb{R}^{n_1\times k_1}, R_j\in \mathbb{R}^{n_2\times k_2} \right\rbrace \]
denotes the search space of MPGLRAM model in Eq.\ref{4-1} and from Eq.\ref{4-3} we can conclude that $\Phi \subset \overline{\Phi}_k$. 
In the following we list the appropriate properties of the proposed model:
\begin{itemize}
\item The search space of the proposed method is larger than GLRAM method.
\item From Eq.\ref{4-1} it is clear that our proposed method is applied to the data with their own format without folding to vectors.
\item This proposed method has $(n_1k_1+n_2k_2)k$ parameters that should be estimated. Since we consider $k$ as a small number, the complexity of this method is not much higher than GLRAM, and still, the probability of occurrence of overfitting is less for this method in comparison with SVD or PCA.
\end{itemize}
\subsection{Solving the proposed model}
In the proposed MPGLRAM model, we deal with the following minimization problem:
\begin{align}‎\label{1-4}
&\min_{\substack{L_j\in \mathbb{R}^{n_1\times k_1}‎ :j‎=1,2,...,k\\ R_j\in \mathbb{R}^{n_2\times k_2}‎ j=1,2,...,k \\ D_i\in \mathbb{R}^{k_1\times k_2} i=1,2,...N,}} \	\sum_{i=1}^N \Vert A_i-\sum_{j=1}^k L_jD_iR_j^T\Vert	_F^2.
‎\end{align}‎

To solve Eq.\ref{1-4} like GLRAM, we use a coordinate descent\cite{3-1} approach. So at each step of the algorithm, we have some subproblems that are solved only according to one variable. 
So, after $p$ steps let 
$L_j^{(p)}$, $R_j^{(p)}$ and ${D_i^{(p)}}$ are the estimations of projections and data matrices.
At the first in this step  we consider the matrices $L_j^{(p)}$ and $R_j^{(p)}$ be known from last step and try to update reduced data $\{D_i\}, i=1,\ldots,N$. So, this leads to the following subproblem
\begin{align}\label{1-5}
\lbrace D_i^{(p+1)}\rbrace_{i=1}^N&=& \arg\min _{\{D_i\}_{i=1}^{N}} \sum_{i=1}^N\Vert A_i-\sum_{j=1}^k L_j^{(p)}D_iR_j^{(p)^T}\Vert_F^2\nonumber\\
& =&\arg\min _{\{d_i\}_{i=1}^{N}} \sum_{i=1}^{N}\Vert a_i-(\sum _{j=1}^k R_j^{(p)} \otimes L_j^{(p)})d_i\Vert_2^2,
\end{align}
where $d_i={\sf vec}(D_i), \quad a_i={\sf vec}(A_i) $.
If we set $B^{(p)}=\sum_{j=1}^{k}\left( R_j^{(p)}\otimes L_j^{(p)}\right)$, this problem can be reformulated as the following least squares problem \cite{2-5}.
\begin{align}\label{1-6}
&\min _{\{d_i\}_{i=1}^{N}}\sum_{i=1}^N \Vert a_i-B^{(p)}d_i\Vert _2^2 
=\min_{D}  \Vert A-B^{(p)}D\Vert _F^2,,
\end{align}
where $A=[a_1,...,a_N]\in \mathbb{R}^{n_1n_2\times N}$ and $D=[d_1,...,d_N]\in \mathbb{R}^{k_1k_2\times N}$. 
This is a well-known least square problem and could be solved easily by direct and iterative matrix computation techniques.

After solving the mentioned problem we should find $L_j$, $R_j$ parameters successively by coordinate descent approach for j=1,...,k.\cite{3-1}
So if we assume 
$\lbrace L_j, R_j\rbrace_{  j=1,...,j'-1,j'+1,...,k}$ and $\{D_i\}_{i=1}^{N}$ are known, we should estimate $L_{j'}$ and $R_{j'}$ in the next step. By these assumption equation Eq.\ref{1-4}, according to Eq.\ref{1-5} and Eq.\ref{1-6} leads to
\begin{align}‎\label{1-7}
‎&\min_{\substack{L_{j'}\in \mathbb{R}^{n_1\times k_1} \\ R_{j'}\in \mathbb{R}^{n_2\times k_2}‎  ‎}} \	\sum_{i=1}^N  \| A_i-\sum\limits_{\scriptstyle j = 1\hfill\atop
	\scriptstyle j \ne j'\hfill}^k L_jD_iR_j^T - L_{j'}D_iR_{j'}^T  \|	_F^2.
\end{align}‎
By replacing $\bar{A}_i= A_i-\sum\limits_{\scriptstyle j = 1\hfill\atop
	\scriptstyle j \ne j'\hfill}^k L_jD_iR_j^T$ this equation becomes
\begin{equation} \label{1-8}
\min_{\substack{L_{j'}\in \mathbb{R}^{n_1\times k_1}‎ \\ R_{j'}\in \mathbb{R}^{n_2\times k_2} }} \	\sum_{i=1}^N \Vert \bar{A_i}- L_{j'}D_iR_{j'}^T \Vert	_F^2.\\
\end{equation}

For solving Eq.\ref{1-8} we use an alternating schema. At first, we fixed $L_{j'}$
and solve the problem according to $R_{j'}$.
By replacing $M_i=L_{j'}D_i$ in Eq.\ref{1-8} and  regarding to  properties of Trace function of matrices we have \cite{2-6}
\begin{align}
&\Vert \overline{A_i}- M_iR_{j'}^T \Vert	_F^2=
tr\left((\overline{A_i}- M_iR_{j'}^T)^T(\overline{A_i}- M_iR_{j'}^T)\right)\nonumber\\
&=tr(\overline{A_i}^T\overline{A_i}-2\overline{A_i}^TM_iR_{j'}^T+R_{j'}M_i^TM_iR_{j'}^T).
\end{align}
By removing the constants in this term the Eq.\ref{1-8} leads to the following problem
\begin{align}
&\min_{R_{j'}} \ 	\sum_{i=1}^Ntr(-2\overline{A_i}^TM_iR_{j'}^T+R_{j'}M_i^TM_iR_{j'}^T)\nonumber\\
&=\min _{R_{j'}} \ 	-2\sum_{i=1}^Ntr(\overline{A_i}^TM_iR_{j'}^T)+\sum_{i=1}^Ntr(R_{j'}M_i^TM_iR_{j'}^T) \nonumber\\
&=\min _{R_{j'}} \ 	-2tr((\sum_{i=1}^N\overline{A_i}^TM_i)R_{j'}^T)+tr(R_{j'}(\sum_{i=1}^NM_i^TM_i)R_{j'}^T),
\end{align}
By defining $N_R=\sum_{i=1}^{N}\bar{A}_i^TM_i$ and $B_R=\sum_{i=1}^{N}M_i^TM_i$, 
this optimization problem becomes
\begin{align} \label{1-9}
&\min _{R_{j'}}\  -2tr(N_RR_{j'}^T)+tr(R_{j'}B_RR_{j'}^T),
&R_{j'}\in \mathbb{R}^{n_2\times k_2}.
\end{align}
 This problem is quadratic convex and so its derivative according to $R_{j'}$ in the optimal point should be zero. Therefore 
 by setting the derivative of  the objective function equal to zero, we have
\begin{equation}
-2N_R+2R_{j'}B_R=0,
\end{equation}
and consequently, $R_j'$ will be 
\begin{equation}
R_{j'}=N_RB_R^{-1}.
\end{equation}

Also, the  same  process could be used to find $L_{j'}$. Here by known $R_{j'}$ and setting $M_i=R_{j'}D_i^T$  Eq.\ref{1-8} according to $L_{j'}$ becomes

\begin{align}\nonumber 
\min _{L_{j'}} \ \Vert \overline{A_i}^T- M_iL_{j'}^T \Vert	_F^2=&tr\left((\overline{A_i}^T- M_iL_{j'}^T)^T(\overline{A_i}^T- M_iL_{j'}^T)\right)\\ \nonumber
=&\min _{L_{j'}} tr( \overline{A_i}\overline{A_i}^T-2\overline{A_i}M_iL_{j'}^T +L_{j'}M_i^TM_iL_{j'}^T)\\
=&\min _{L_{j'}} \  
-2\sum_{i=1	}^N tr((\sum_{i=1}^N\overline{A_i}M_i)L_{j'}^T)+tr(L_{j'}(\sum_{i=1}^NM_i^TM_i)L_{j'}^T).
\end{align}
By replacing $N_L=\sum_{i=1}^{N} \bar{A}_iM_i$ and $B_L=\sum_{i=1}^{N}M_i^TM_i$, we have
\begin{equation}
\min _{L_{j'}}\  -2tr(N_LL_{j'}^T)+tr(L_{j'}B_LL_{j'}^T).
\end{equation}
which its solution is
\begin{equation}
L_{j'}=N_LB_L^{-1}.
\end{equation}
Since we should obtain all $k$ variables, we do previous stages $k$ times to determine $L_j$ and $R_j$, for all amount of $j=1,...,k$.
Eventually, at each time, $k-1$ parameters will be assumed to be fixed  except one parameter that should be estimated.
And for the next parameter, the updated form of the previous ones will be used. Besides, we can repeat this alternative process more than once. 
 The details of the proposed coordinate descent process can be seen in Algorithm.\ref{alg:1.0}. It is easy to show that this proposed method based on coordinate descent approach for our proposed MPGLRAM model is a descent algorithm and at each step the objective function decrease.
\begin{algorithm}[h]
	\caption{MPGLRAM} \label{alg:1.0}
	\begin{algorithmic}[1]
		\REQUIRE Input: matrices $\lbrace A_i\rbrace _{i=1}^N$,$k$, $iter$ \\
		\ENSURE Output: matrices $\lbrace D_i\rbrace _{i=1}^N$
		\STATE Initialize $\lbrace L_j,R_j\rbrace _{j=1}^k$	
		\STATE Construct $B=\sum_{j=1}^k (R_j\otimes L_j)$
		\STATE Update $D$ by solving Eq.\ref{1-6}
		\STATE for i from 1 to \textit{iter}
		\STATE \ \ \ \ For each $j', j'=1,...,k$ Update  ${L_j'}$ and ${R_j'}$ by ?? and ??.
		\STATE \ \ Update $D$ by solving Eq.\ref{1-6}
		\STATE EndFor
	\end{algorithmic}
\end{algorithm}

\begin{lemma}
The proposed coordinate descent algorithm for MPGLRAM is a decreasing process.
\end{lemma}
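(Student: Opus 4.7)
The plan is to show that at every update in Algorithm \ref{alg:1.0} the objective function
\[
F\bigl(\{L_j\},\{R_j\},\{D_i\}\bigr)=\sum_{i=1}^{N}\Bigl\|A_i-\sum_{j=1}^{k}L_j D_i R_j^{T}\Bigr\|_F^{2}
\]
is non-increasing, and hence the sequence of objective values produced by the algorithm is monotonically decreasing (in the weak sense) and bounded below by $0$.

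First I would treat the $D$-update in line 3 (and 6) of Algorithm \ref{alg:1.0}. With $\{L_j\},\{R_j\}$ fixed, Eq.\ref{1-6} is an unconstrained linear least squares problem with coefficient matrix $B=\sum_j R_j\otimes L_j$. Since the minimum of a least squares problem is attained (using, e.g., the pseudo-inverse when $B$ is rank deficient), the returned $D^{(p+1)}$ satisfies
\[
F\bigl(\{L_j^{(p)}\},\{R_j^{(p)}\},\{D_i^{(p+1)}\}\bigr)\le F\bigl(\{L_j^{(p)}\},\{R_j^{(p)}\},\{D_i^{(p)}\}\bigr),
\]
since $D^{(p)}$ is a feasible point for that subproblem.

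Next I would treat the $R_{j'}$ and $L_{j'}$ updates. Fixing all other variables, Eq.\ref{1-9} is the quadratic function $f(R_{j'})=-2\,\mathrm{tr}(N_R R_{j'}^T)+\mathrm{tr}(R_{j'} B_R R_{j'}^T)$. Since $B_R=\sum_i M_i^T M_i$ is symmetric positive semidefinite, $f$ is convex, and the first-order condition $\nabla f=-2N_R+2R_{j'}B_R=0$ characterises a global minimiser. When $B_R$ is invertible this minimiser is $R_{j'}=N_R B_R^{-1}$; otherwise any solution of $R_{j'}B_R=N_R$ (e.g., $R_{j'}=N_R B_R^{\dagger}$) is optimal, and the old iterate remains feasible, so the objective cannot increase. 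The same argument applies verbatim to $L_{j'}$ using $B_L\succeq 0$ and the update $L_{j'}=N_L B_L^{-1}$.

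Combining these observations, every inner update replaces the current iterate by the exact (or pseudo-inverse based) minimiser of $F$ over a single block of coordinates while keeping the remaining blocks fixed; therefore $F$ cannot increase across any single step. Chaining the inequalities through one outer iteration of Algorithm \ref{alg:1.0} yields
\[
F^{(p+1)}\le F^{(p)},
\]
which is the claim. Since $F\ge 0$, the sequence $\{F^{(p)}\}$ is convergent. The only subtle point, which I would address explicitly, is the possible singularity of $B_R$ or $B_L$: either we assume throughout that the current $M_i$'s span enough so that $B_R,B_L\succ 0$ (which holds generically after initialisation), or we state the update using a pseudo-inverse and observe that this still yields a global minimiser of the convex quadratic, so the descent property is preserved.
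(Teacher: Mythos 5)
Your proposal is correct and follows essentially the same route as the paper: both arguments observe that every update in Algorithm~\ref{alg:1.0} exactly minimizes the objective over one block of coordinates with the others held fixed, and then chain the resulting inequalities across one outer iteration to conclude $F^{(p+1)}\le F^{(p)}$. The only difference is that you explicitly verify global optimality of each block subproblem (least squares for $D$, convex quadratics for $L_{j'}$ and $R_{j'}$, with the pseudo-inverse handling possible singularity of $B_R$ or $B_L$), a point the paper's more abstract chain of inequalities takes for granted.
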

\begin{proof}
The objective function of MPGLRAM method is
\begin{equation}
	\min_{\{L_{j},R{j}\}_{j=1}^{k},\{D_{i}\}_{i=1}^{N}} f(\{D_i\}_{i=1}^N, \{L_j,R_j,\}_{j=1}^k),
\end{equation}
where 
\[
f\left(\{D_i\}_{i=1}^N, \{L_j,R_j\}_{j=1}^k\right)=\sum_{i=1}^N \Vert A_i-\sum_{j=1}^k L_jD_iR_j^T\Vert	_F^2.
\]
For simplicity, we set $Z_1=\{D_1,...,D_N\}, Z_{2l}=L_l,$ and $Z_{2l+1}=R_l$ for $l=1,...,k$. So, we have
\begin{equation}\label{n1}
	g(Z_1,...,Z_{2k+1})=f(\{D_i\}_{i=1}^N, \{L_j,R_j,\}_{j=1}^k),
\end{equation}
and to show the decreasing property of the proposed algorithm, it is enough to indicate
\begin{equation}
	g(Z_1^{(p+1)},...,Z_{2k+1}^{(p+1)})\leq g(Z_1^{(p)},...,Z_{2k+1}^{(p)})
\end{equation}
where $Z_i^{(p+1)}$,$Z_i^{(p)}$ for $i=1,...,2k+1$ denote the approximations of solution of Eq.\ref{n1} at  $(p+1)$-th and $p$-th steps, respectively. Moreover, it is obvious that at $(p+1)$-th step of the proposed algorithm, coordinate descent process is applied $(2k+1)$ times, and at each step, one of the variables is updated. As a result, we can maintain that
\begin{align}\nonumber
	g(Z_1^{(p+1)},...,Z_{2k+1}^{(p+1)})=&\min_{Z_{2k+1}} 	g(Z_1^{(p+1)},...,Z_{2k}^{(p+1)},Z_{2k+1})\\ \nonumber
	&\leq g(Z_1^{(p+1)},...,Z_{2k}^{(p+1)},Z_{2k+1}^{(p)})\\ \nonumber
	&\vdots \\ \nonumber
	&\leq \min_{Z_l} g(Z_1^{(p+1)},...,Z_{l-1}^{(p+1)},Z_l,Z_{l+1}^{(p)},Z_{2k+1}^{(p)})\\ \nonumber
	&\leq g(Z_1^{(p+1)},...,Z_{l-1}^{(p+1)},Z_l^{(p)},Z_{2k+1}^{(p)})\\\nonumber
	&\vdots\\ \nonumber
	&\leq g(Z_1^{(p)},...,Z_{2k+1}^{(p)}),
	\end{align}
	which finish the proof.
\end{proof}
This proves the decreasing behavior of the proposed algorithm to solve MPGLRAM.
The number of parameters that should be estimated in MPGLRAM method by k-pair of projections is $k(m+n)$, which is much less than $mn$ parameters in SVD. But still 
close to the $(m+n)$ parameters of GLRAM method for small $k$.
 Therefore, by using k pairs of projectors we expand the search space merely to find the optimal answer and protect it from tending to overfitting due to many parameters like SVD.   

\section{Experimental Results}

In this section to show the quality of the proposed MPGLRAM method, we present some experiments on  well-known data sets 
ORL
\footnote{\text{http://www.cad.zju.edu.cn/home/dengcai/Data/FaceData.html}}
, Yale
\footnote{\text{http://web.mit.edu/emeyers/www/face\underline{ }databases.html}}
, YaleB
\footnote{\text{http://vision.ucsd.edu/~leekc/ExtYaleDatabase/ExtYaleB.html}}
,and PIE
\footnote{\text{http://featureselection.asu.edu/old/datasets/pixraw10P.mat}}. 
The details of the data are listed in Table.\ref{tab.2}.
\begin{small}
	\begin{table}[h]
		\centering
		\caption{\footnotesize{
				Characters of different data sets.
		}}\label{tab.2}
		\begin{tabular}{ c | c | c | c }
			\hline
			Data & Size & Scale & Class number \\
			\hline
			ORL & 400 & $32\times 32$ & 40\\
			\hline
			Yale & 165 & $32\times 32$ & 11 \\
			\hline
			YaleB & 2414 & $32\times 32$ & 38\\
			\hline
			PIE & 210 & $44\times 44$ & 10\\
			\hline
		\end{tabular}
	\end{table}
\end{small}

Also Fig.\ref{fig.2} shows some samples of YaleB data set.
\begin{figure}
	\begin{center}
		\includegraphics[scale=0.20]{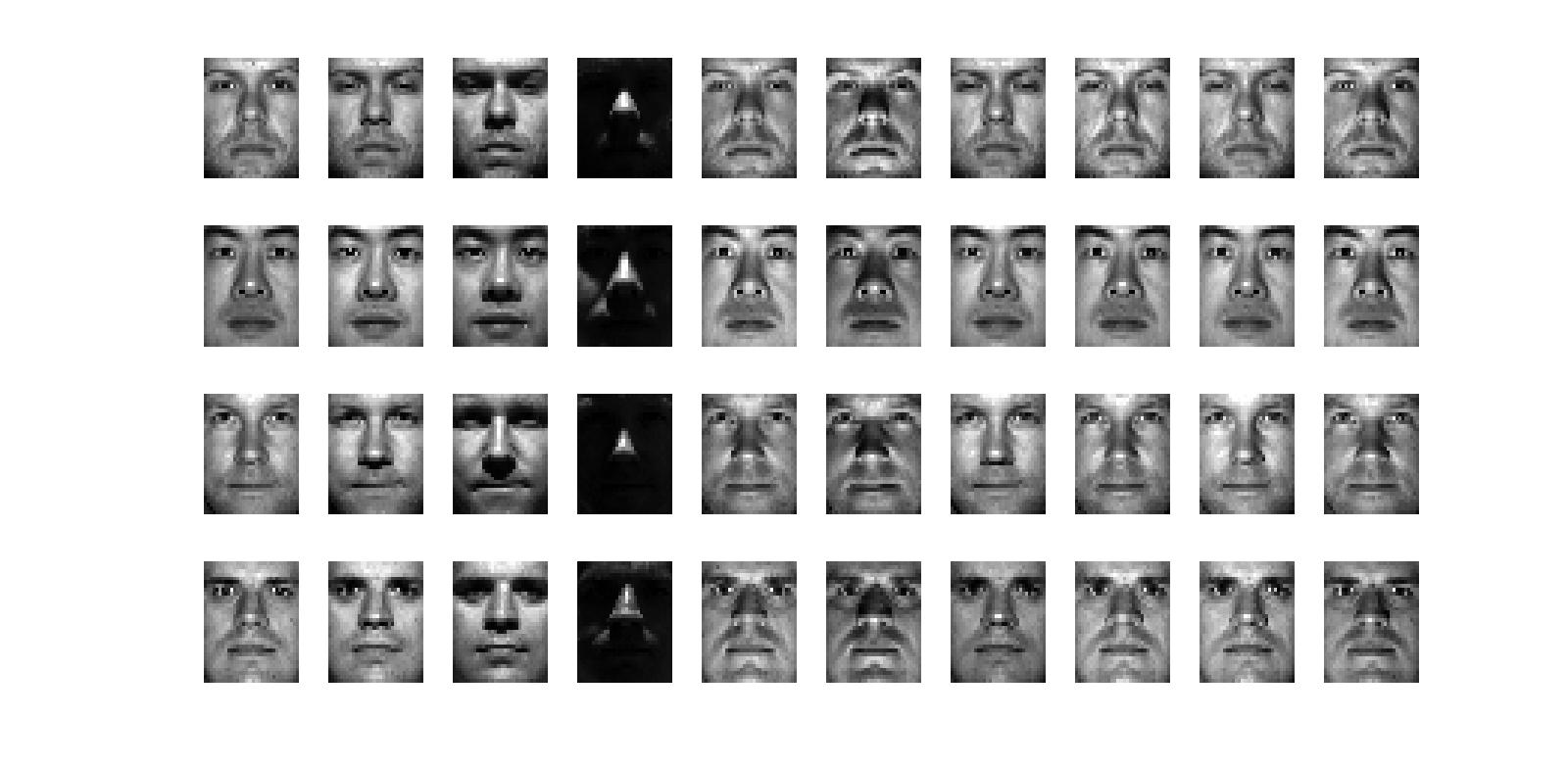}
		\caption{{\footnotesize{10 samples of 4 individuals with YaleB database}}}\label{fig.2}
	\end{center}	
\end{figure}
Here we compare  the MPGLRAM with GLRAM and SVD based  based on  the quality of reconstruction error and the accuracy of classification  on the projected data.
\subsection{Comparison based reconstruction error}
At the first type of evaluation, we applied  MPGLRAM, GLRAM and SVD methods on the mentioned data sets to project them to different smaller dimensions and used the quality of reconstructions by these projected data as evaluation of the quality of these DR methods. The quality of the reconstruction is evaluated via Root Mean Square Reconstruction Error (RMSRE) measure.

For different values of $d$ we reduced each data matrix (sample) by GLRAM and MPGLRAM
to matrices in $\mathbb{R}^{d\times d}$. Also to compare these methods with SVD we have to project the data to a vector in $\mathbb{R}^{d^2}$ by SVD method. The RMSRE of all the mentioned approximations for different values of $d$ are presented in Figure.\ref{fig.3}.
Here we applied MPGLRAM with different $k=2,\ldots,5$
\begin{figure}[!h]
	\begin{center}
		\centering
		\includegraphics[scale=0.053]{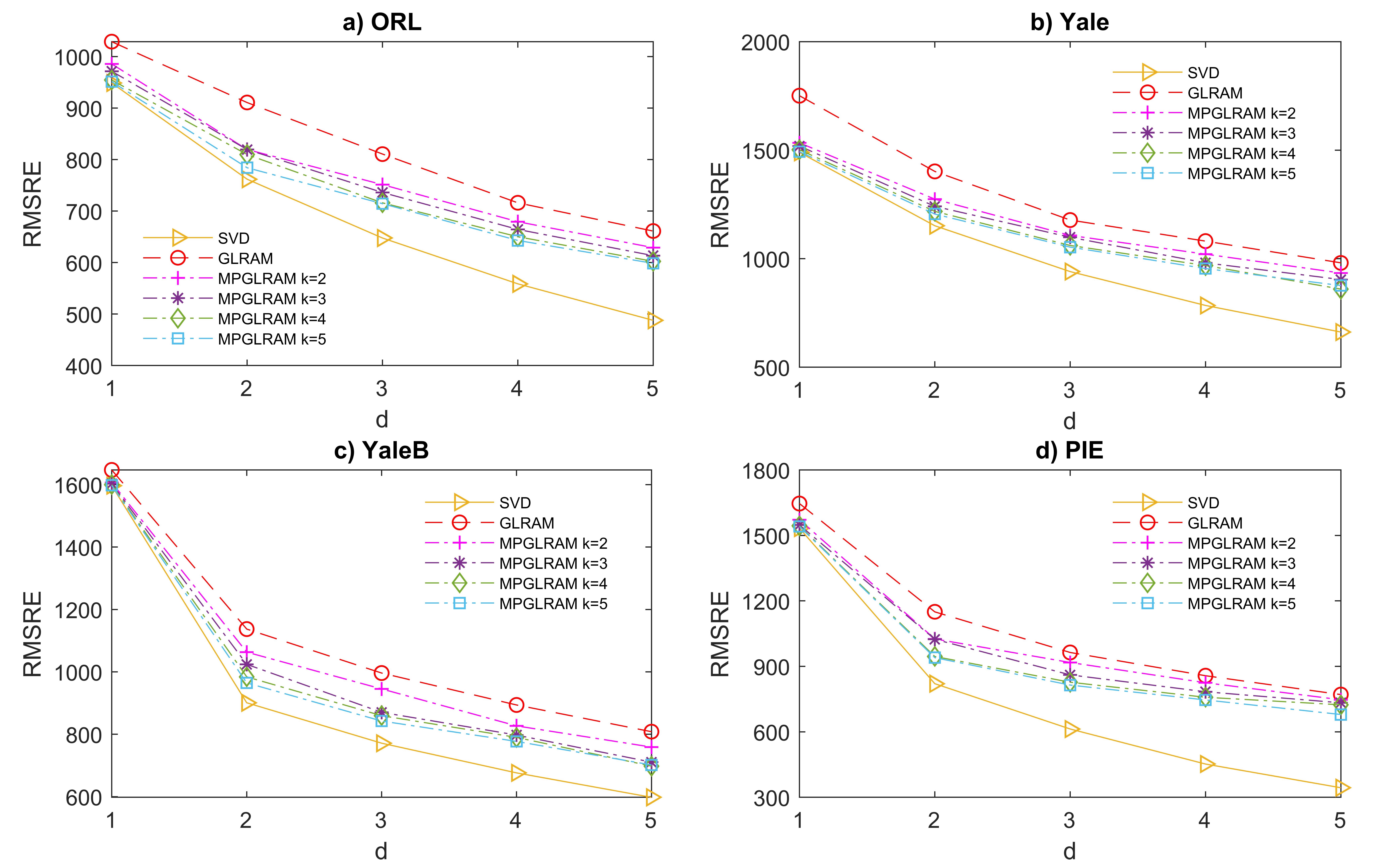}
		\caption{{\footnotesize{ RMSRE   of MPGLRAM , GLRAM and SVD  versus  different values of $a$ applied on  a) ORL, b) Yale, c) YaleB, and d) PIE data sets.}}}\label{fig.3}
	\end{center}
\end{figure}
As we can see in Figure.\ref{fig.3}, the results of the proposed MPGLRAM method by all $k$ and for all datasets and different values of $d$ are better than GLRAM. Also by increasing the value of $k$, the RMSRE of MPGLRAM is improved for all data and become near the SVD method. This confirms the effect of increasing the search space in the quality of the reconstruction of the methods.
Also, here the SVD method gives the best results. This phenomenon was predictable due to the large size of its search space.
But as we mentioned before, this large space causes a large amount of parameters that could be led to over-fitting that will be investigated in the following.
\subsection{Evaluation based on classification accuracy on Projected data}
In this section, we compare the classification accuracy on the projected data by the proposed MPGLRAM, GLRAM, and SVD methods. In our experiments, we applied \textit{$K$-fold cross-validation} measure with different amount of $K$, $K=2,5,10$.
For classification, we used discriminant analysis classifier, which in Matlab its command is ${\sf fitcdiscr}$\cite{bish}. In experiments, we found that this classifier works better than KNN and SVM for our data sets.
From Table.\ref{tab.2}, its clear that the ratio of the number of samples over the number of features for ORL, Yale, PIE, and YaleB are $0.39, 0.16, 0.11$ and $2.36$, respectively.
So, we could understand that the possibility of overfitting for YaleB is much less than other data sets. This means that we expect that for this data set the SVD method could work better than others, but for other datasets, due to the possibility of overfitting we could expect that multilinear methods could work better than SVD.
In the following by the experiments we will investigate this issue. 

We start with YaleB dataset.  Here we applied SVD, GLRAM, and MPGLRAM on this data to project each sample to a vector with dimension $d^2$ for SVD and matrices with dimensions $d\times d$ for GLRAM and MPGLRAM, for different values $d=5,6,7,8,9$.
Also for MPGLRAM we report only the best results of the best rank-$k$ between 2,3,4,5.
The obtained accuracies of all mentioned methods could be seen in Table.\ref{tab.5}.
\begin{small}
	\begin{table}[H]
		\centering
		\caption{\footnotesize{
				Comparison the percentage of accuracy in SVD, GLRAM, MPGLRAM on YaleB.
		}}\label{tab.5}
		\scalebox{0.7}[0.7]{
		\begin{tabular}{|c|c|c|c|c|}
			\hline
			\textbf{k-fold} & \textbf{d} & \textbf{SVD} & \textbf{GLRAM} & \textbf{MPGLRAM}\\ \hline
			\hline	
			\multirow{5}{*}{\textbf{2}}  & 5          & 70.34             &      62.88     &  65.53           \\ \cline{2-5} 
			& 6          &     74.40    & 69.76          & 74.52            \\ \cline{2-5} 
			& 7          &     77.84    & 74.11          & 78.21            \\ \cline{2-5} 
			& 8          & 	   80.74 	& 78.67          & 81.35            \\ \cline{2-5} 
			& 9          &     83.97    & 81.23          & 82.80             \\ \hline
			\hline
			\multirow{5}{*}{\textbf{5}}  & 5          &  71.41            &    63.63      &      66.78      \\ \cline{2-5} 
			& 6          &    75.23     & 71.00          & 76.18            \\ \cline{2-5} 
			& 7          &    79.33     & 75.23       	 & 79.08            \\ \cline{2-5} 
			& 8          &    82.44     & 80.07          & 82.27            \\ \cline{2-5} 
			& 9          &    84.18     & 82.52        	 & 84.05            \\ \hline
			\hline
			\multirow{5}{*}{\textbf{10}} & 5          & 72.04             &     64.21      &  67.61          \\ \cline{2-5} 
			& 6          & 75.23 		& 71.33			 & 76.59		     \\ \cline{2-5} 
			& 7     	 & 79.78 		& 75.60          & 79.45           \\ \cline{2-5} 
			& 8          & 82.64 		& 80.41          & 82.60            \\ \cline{2-5} 
			& 9          & 85.05 		& 82.56 	 	 & 84.42              \\ \hline
		\end{tabular}}
	\end{table}
\end{small}
From this table, it is clear that the MPGLRAM method works better than GLRAM for all dimensions and for all K-fold experiments. Also, although the parameters of $MPGLRAM$
is less than SVD, but almost these methods give a similar performance. This shows the power of the proposed method because in this situation the possibility of overfitting for SVM was less than other datasets and due to large search space we expected that SVD works better than MPGLRAM method, but the results do not show this. 

As the second data set, we consider the Yale dataset. Table.\ref{tab.4} shows the obtained results for this dataset.
\begin{small}
	\begin{table}[H]
		\centering
		\caption{\footnotesize{
				Comparison the percentage of accuracy in SVD, GLRAM, MPGLRAM on Yale.
		}}\label{tab.4}
		\scalebox{0.7}[0.7]{
		\begin{tabular}{|c|c|c|c|c|}
			\hline
			\textbf{k-fold} & \textbf{d} & \textbf{SVD} & \textbf{GLRAM} & \textbf{MPGLRAM}\\ \hline
			\hline	
			\multirow{3}{*}{\textbf{2}} & 5 & 75.15 & 73.33 & 82.42
			\\ \cline{2-5}
			& 6 & 75.15 & 75.76 & 82.42 \\ \cline{2-5}
			& 7 & 69.69 & 69.09 & 78.18 \\ \cline{2-5}
			& 8 & 58.78 & 50.30 & 58.18 \\ \cline{2-5}
			& 9 & 63.03 & 56.36 & 63.03 \\ \hline
			\hline
			\multirow{3}{*}{\textbf{5}} & 5 & 81.82 & 83.03 & 84.85
			\\ \cline{2-5}
			& 6 & 83.64 & 82.42 & 87.88 \\ \cline{2-5}
			& 7 & 83.64 & 81.82 & 89.09 \\ \cline{2-5}
			& 8 & 83.64 & 84.85 & 89.70 \\ \cline{2-5}
			& 9 & 83.03 & 81.21 & 85.45 \\ \hline
			\hline
			\multirow{3}{*}{\textbf{10}} & 5 & 79.39 & 83.64 & 86.06 
			\\ \cline{2-5}
			& 6 & 85.45 & 84.85 & 87.88 \\ \cline{2-5}
			& 7 & 84.85 & 86.67 & 90.30 \\ \cline{2-5}
			& 8 & 87.27 & 87.27 & 91.52 \\ \cline{2-5}
			& 9 & 88.48 & 87.27 & 89.09 \\ \hline
		\end{tabular}}
	\end{table}
\end{small}
This table shows that our proposed method not only works better than GLRAM, but even its performance is also better than the SVD method. Here, our proposed method achieves its best result in 
$d=5$ with accuracy $82.42$, while this for SVD is $75.15$ with $d=5$ and for GLRAM is $75.76$ with $d=6$ in 2-fold. For 5-fold the best results of MPGLRAM, SVD, and GLRAM methods are $89.70, 83.64$ and $84.85$, receptively. Here we see that the performance of the proposed method at least $5\%$ larger than its nearest competitor, i.e., GLRAM.
For 10-fold we see that the best results of MPGLRAM, SVD, and GLRAM are $91.52, 88.48$ and $87.27$ which are obtained for dimensions $8,9,8$. Here we see that our proposed method with small $d=6$ gives better accuracy in comparisons with $GLRAM$ and $SVD$ with larger dimensions $d=7,8,9$. By these explanations, we could conclude that our proposed method works better than other methods in Yale dataset. 

As a third test, we consider the results on ORL dataset. The results can be found in Table.\ref{tab.3}
\begin{small}
	\begin{table}[H]
		\centering
		\caption{\footnotesize{
				Comparison the percentage of accuracy in SVD, GLRAM, MPGLRAM on ORL.
		}}\label{tab.3}
		\scalebox{0.7}[0.7]{
		\begin{tabular}{|c|c|c|c|c|}
			\hline
			\textbf{k-fold} & \textbf{d} & \textbf{SVD} & \textbf{GLRAM} & \textbf{MPGLRAM}\\ \hline
			\hline	
			\multirow{5}{*}{\textbf{2}}  & 5          &     96.25         & 96.25          & 96.25            \\ \cline{2-5} 
			& 6          &   95.50           & 96.50          & 98.00            \\ \cline{2-5} 
			& 7          &   96.25           & 97.00          & 98.00            \\ \cline{2-5} 
			& 8          &   96.75           & 98.25          & 98.25             \\ \cline{2-5} 
			& 9          &   95.00           & 97.00          & 97.75            \\ \hline
			\hline
			\multirow{5}{*}{\textbf{5}}  & 5          &  96.75            & 97.00          & 97.75            \\ \cline{2-5} 
			& 6          &   97.75           & 98.25          & 99.25            \\ \cline{2-5} 
			& 7          &   97.75           & 98.75          & 99.50            \\ \cline{2-5} 
			& 8          &   98.25           & 99.50          & 99.50            \\ \cline{2-5} 
			& 9          &   98.75           & 99.00          & 99.50            \\ \hline
			\hline
			\multirow{5}{*}{\textbf{10}} & 5          &  96.50            & 97.00          & 98.50            \\ \cline{2-5} 
			& 6          & 98.00             & 97.75          & 99.25            \\ \cline{2-5} 
			& 7          & 99.00             & 99.00          & 99.25            \\ \cline{2-5} 
			& 8          & 98.50             & 99.25          & 99.75            \\ \cline{2-5} 
			& 9          & 98.75             & 99.25          & 99.75            \\ \hline
		\end{tabular}}
	\end{table}
\end{small}
In our experiments, as we stated in Section.3, we expect that MPGLRAM gains results better than GLRAM and in some cases, even better than SVD. Eventually, not only our results are better than GLRAM, but also there are a sizable number of cases that we reach more accurate classification than SVD. Since SVD has more parameter in comparison with our method so
in overall one could see that the proposed method has better performance in comparison with SVD. Also in all situations is better than or equal to GLRAM method.

As the last experiment, we report the result for PIE dataset in Table.\ref{tab.6}.
\begin{small}
	\begin{table}[H]
		\centering
		\caption{\footnotesize{
								Comparison the percentage of accuracy in SVD, GLRAM, MPGLRAM on PIE.	}}\label{tab.6}
	\scalebox{0.7}[0.7]{
		\begin{tabular}{|c|c|c|c|c|}
			\hline
			\textbf{k-fold} & \textbf{d} & \textbf{SVD} & \textbf{GLRAM} & \textbf{MPGLRAM}\\ \hline
			\hline	
			\multirow{3}{*}{\textbf{2}} & 5 & 100 & 97.14 & 100
			\\ \cline{2-5}
			& 6 & 98.57 & 99.04 & 100 \\ \cline{2-5}
			& 7 & 99.04 & 96.67 & 100 \\ \cline{2-5}
			& 8 & 98.09 & 96.67 & 99.52 \\ \cline{2-5}
			& 9 & 92.85 & 95.23 & 99.04 \\ \hline
			\hline
			\multirow{3}{*}{\textbf{5}} & 5 & 100 & 98.57 & 100
			\\ \cline{2-5}
			& 6 & 99.04 & 99.04 & 100 \\ \cline{2-5}
			& 7 & 99.52 & 99.04 & 100 \\ \cline{2-5}
			& 8 & 99.52 & 98.57 & 100 \\ \cline{2-5}
			& 9 & 99.04 & 99.52 & 100 \\ \hline
			\hline
			\multirow{3}{*}{\textbf{10}} & 5 & 100 & 99.04 & 100 
			\\ \cline{2-5}
			& 6 & 99.52 & 99.04 & 100 \\ \cline{2-5}
			& 7 & 99.52 & 99.04 & 100 \\ \cline{2-5}
			& 8 & 99.52 & 99.04 & 100 \\ \cline{2-5}
			& 9 & 99.04 & 99.04 & 100 \\ \hline
		\end{tabular}}
	\end{table}
\end{small}

From this table, we can see that our method works better than others.
As we see from Table.\ref{tab.6}, due to the fact that the figures for classification accuracy for almost all of these situations are just near 100,
we cannot perceive the effects of these 3 different methods very well. As a result of this, in order to show that our proposed method yields better results in comparison with the  GLRAM and the SVD,  we have done our experiments by another classifier, $\textit{K Nearest Neighbors}$ on PIE dataset.
In Figure.\ref{fig.5}, we demonstrate the result of 1-NN, 2-NN, and 3-NN classifiers with 2,5, and 10-fold cross-validation on PIE dataset respect to different values of $d=[2,...,16]$.
\begin{figure}[H]
	\begin{center}
		\begin{center}
		\includegraphics[scale=0.55]{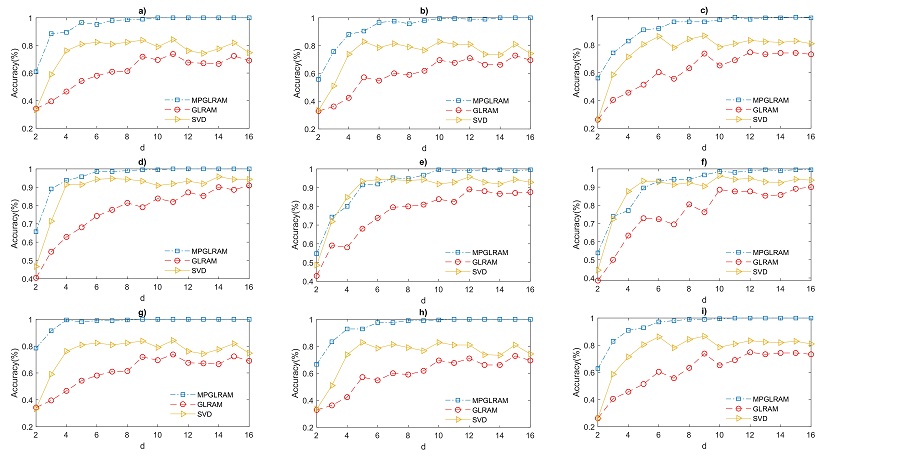}
		\end{center}
		\caption{{\footnotesize{ Comparison between SVD with rank-$d^2$, GLRAM, and MPGLRAM with $(d,d)$ ranks on PIE. a) 2-fold with 1-NN, b) 2-fold with 2-NN, c)2-fold with 3-NN, d) 5-fold with 1-NN, e) 5-fold with 2-NN f) 5-fold with 3-NN, g) 10-fold with 1-NN, h) 10-fold with 2-NN, and i) 10-fold with 3-NN.}}}\label{fig.5}
	\end{center}	
\end{figure}
From this table, the quality of the proposed MPGLRAM over other methods cloud be found clearly.

\subsection{$k$-Parameter}
In our proposed methods, we expanded the GLRAM search space by using k-pair projections. So, at first glance, this seems to play a vital role to achieve the best accuracy. While, as we have seen from the experimental results, by increasing the value of $k$ the RMSRE will be decreased as well, but in classification, the best accuracy occurred in different values of $k$. Also, even for small values of $k$ MPGLRAM gives results better than other methods. To show this issue we report the accuracy of the MPGLRAM for different datasets and different values of $d$ according to $k=2,3,4,5$ and 2-fold, 5-fold, and 10-fold cross-validation in Figure.\ref{fig.6}, Figure.\ref{fig.7}, and Figure.\ref{fig.8} respectively.
\begin{figure}[h]
	\centering
	\includegraphics[scale=.55]{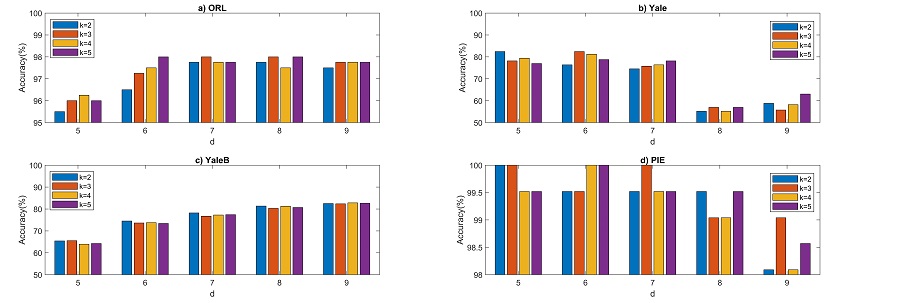}
	\caption{{\footnotesize{ The accuracy of MPGLRAM regarding to different values of $k=2,3,4,5$ on a) ORL, b) Yale, c) YaleB, and d) PIE. 
	}}}\label{fig.6}
\end{figure}
\begin{figure}[!h]
	\centering
	\includegraphics[scale=.55]{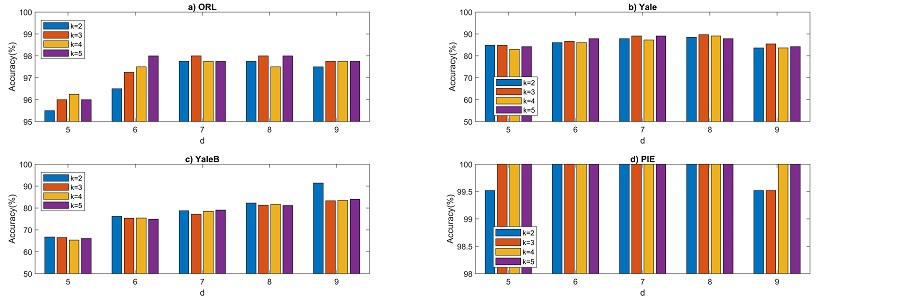}
	\caption{{\footnotesize{ The accuracy of MPGLRAM regarding to different values of $k=2,3,4,5$ on a) ORL, b) Yale, c) YaleB, and d) PIE. 
	}}}\label{fig.7}
\end{figure}
\begin{figure}[!h]
	\centering
	\includegraphics[scale=.55]{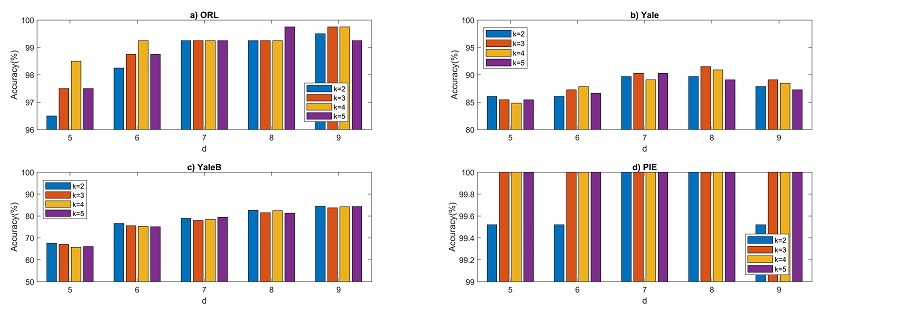}
	\caption{{\footnotesize{ The accuracy of MPGLRAM regarding to different values of $k=2,3,4,5$ on a) ORL, b) Yale, c) YaleB, and d) PIE. 
	}}}\label{fig.8}
\end{figure}

Here we see in different situations the best results obtained for small values of $k$. 

The value of $k$ can change from 1 to the minimum amount of the size of data. For example for a $m\times n $ matrix it can be $1\leq k \leq \min \lbrace m,n\rbrace$. When $k=1$ our method behaves like GLRAM, except its orthogonality constraints, so it has a small search space $m+n$. While when $k=\min \lbrace m,n\rbrace$, the search space is equal to the vectorized form of the matrix. So, in MPGLRAM we use $k$ to make a balance between these two methods. Therefore, in MPGLRAM, the search space will be $k(m+n)$  which can be larger than GLRAM and smaller than vectorized dimension reduction method, SVD.\\
In our experiments, we use $k=2,3,4,5$ to show that the proposed method works better than GLRAM especially when the dimension reduced to a lower value. An appropriate value for $k$ could be obtained by cross-validation approach.

\section{Conclusion}
In this paper, we proposed a novel method using the advantages of both SVD and GLRAM simultaneously to find a more accurate answer rather than GLRAM with lower complexity than SVD. This is done by k-pair of transformation in GLRAM method to enlarge its search space. By this method by few numbers of parameters which is impotant in the reduction of the possibility of over-fitting, we able to find results better than GLRAM and even SVD.
 The reported experimental results confirm the quality of the proposed method. Here we found that our method 
at the same time have the benefits of SVD and GLRAM methods and in fact, gives a trade-off between the size of search space(free parameters) and occurrence of overfitting and so almost gives better results in comparison with SVD and GLRAM.
 Also, this approach could be used on other multilinear methods like multilinear LDA.





\section*{References}


\begin{thebibliography}{99} 
	
	
	\bibitem{0-1}
	Zhao W, Chellappa R, Phillips PJ, Rosenfeld A. Face recognition: A literature survey. ACM computing surveys (CSUR). 2003 Dec 1;35(4):399-458.
	
	\bibitem{0-2}
	Umbaugh SE. Digital image processing and analysis: human and computer vision applications with CVIPtools. CRC press; 2016 Apr 19.
	
	\bibitem{0-3}
	Jordan MI, Mitchell TM. Machine learning: Trends, perspectives, and prospects. Science. 2015 Jul 17;349(6245):255-60.
	
	\bibitem{0-4}
	Wernick MN, Yang Y, Brankov JG, Yourganov G, Strother SC. Machine learning in medical imaging. IEEE signal processing magazine. 2010 Jul;27(4):25-38.
	
	\bibitem{2-9}
	Brunetti A, Buongiorno D, Trotta GF, Bevilacqua V. Computer vision and deep learning techniques for pedestrian detection and tracking: A survey. Neurocomputing. 2018 Jul 26;300:17-33.
	
	\bibitem{2-8}
	Vadim K. Overview of different approaches to solving problems of Data Mining. Procedia Computer Science. 2018 Dec 31;123:234-9.
	
	\bibitem{0-5}
	Hearst MA, Dumais ST, Osuna E, Platt J, Scholkopf B. Support vector machines. IEEE Intelligent Systems and their applications. 1998 Jul;13(4):18-28.
	
	\bibitem{0-6}
	Ye J, Li Q. LDA/QR: an efficient and effective dimension reduction algorithm and its theoretical foundation. Pattern recognition. 2004 Apr 1;37(4):851-4.
	
	\bibitem{0-7}
	Christopher MB. PATTERN RECOGNITION AND MACHINE LEARNING. Springer-Verlag New York; 2016.
	
	\bibitem{0-8}
	Klema V, Laub A. The singular value decomposition: Its computation and some applications. IEEE Transactions on automatic control. 1980 Apr;25(2):164-76.
	
	\bibitem{0-9}
	Cai D, He X, Hu Y, Han J, Huang T. Learning a spatially smooth subspace for face recognition. InComputer Vision and Pattern Recognition, 2007. CVPR'07. IEEE Conference on 2007 Jun 17 (pp. 1-7). IEEE.
	
	\bibitem{1-1}
	Lu H, Plataniotis KN, Venetsanopoulos AN. A survey of multilinear subspace learning for tensor data. Pattern Recognition. 2011 Jul 1;44(7):1540-51.
	
	\bibitem{1-2}
	Kolda TG, Bader BW. Tensor decompositions and applications. SIAM review. 2009 Aug 5;51(3):455-500.
	
	\bibitem{1-0}
	Donoho DL. High-dimensional data analysis: The curses and blessings of dimensionality. AMS Math Challenges Lecture. 2000 Aug 6;1:32.
	
\bibitem{1-8}
		Ye J. Generalized low rank approximations of matrices. Machine Learning. 2005 Nov 1;61(1-3):167-91.
	\bibitem{1-3}
	Lu H, Plataniotis KN, Venetsanopoulos AN. MPCA: Multilinear principal component analysis of tensor objects. IEEE transactions on Neural Networks. 2008 Jan;19(1):18-39.
	
	\bibitem{1-4}
	Lu H, Plataniotis KN, Venetsanopoulos AN. A taxonomy of emerging multilinear discriminant analysis solutions for biometric signal recognition. Wiley/IEEE; 2009 Oct 29.
	
	\bibitem{1-5}
	Guo X, Huang X, Zhang L, Zhang L, Plaza A, Benediktsson JA. Support tensor machines for classification of hyperspectral remote sensing imagery. IEEE Transactions on Geoscience and Remote Sensing. 2016 Jun;54(6):3248-64.
	
	\bibitem{1-6}
	Nie F, Xiang S, Song Y, Zhang C. Extracting the optimal dimensionality for local tensor discriminant analysis. Pattern Recognition. 2009 Jan 1;42(1):105-14.
	
	\bibitem{2-3}
	Tao D, Li X, Hu W, Maybank S, Wu X. Supervised tensor learning. In Data Mining, Fifth IEEE International Conference on 2005 Nov 27 (pp. 8-pp). IEEE.
	
	\bibitem{3-3}
	Dong W, Wang P, Yin W, Shi G, Wu F, Lu X. Denoising Prior Driven Deep Neural Network for Image Restoration. arXiv preprint arXiv:1801.06756. 2018 Jan 21.
	
	\bibitem{3-4}
	Ghaddar A, Razafindralambo T, Simplot-Ryl I, Tawbi S, Hijazi A. Algorithm for data similarity measurements to reduce data redundancy in wireless sensor networks. InWorld of Wireless Mobile and Multimedia Networks (WoWMoM), 2010 IEEE International Symposium on a 2010 Jun 14 (pp. 1-6). IEEE.
	
	\bibitem{3-5}
	Wiatowski T, Bölcskei H. A mathematical theory of deep convolutional neural networks for feature extraction. IEEE Transactions on Information Theory. 2018 Mar;64(3):1845-66.
	
	\bibitem{1-9}
	Van Der Maaten L, Postma E, Van den Herik J. Dimensionality reduction: a comparative. J Mach Learn Res. 2009 Oct 26;10:66-71.
	
	\bibitem{2-1}
	Björck Å. Numerical methods in matrix computations. Springer; 2016.
	

	
	\bibitem{2-7}
	Ren CX, Dai DQ. Bilinear Lanczos components for fast dimensionality reduction and feature extraction. Pattern recognition. 2010 Nov 1;43(11):3742-52.
	
	\bibitem{1-7}
	Hou C, Nie F, Zhang C, Yi D, Wu Y. Multiple rank multi-linear SVM for matrix data classification. Pattern Recognition. 2014 Jan 1;47(1):454-69.
	
	\bibitem{2-0}
	Jimenez LO, Landgrebe DA. Supervised classification in high-dimensional space: geometrical, statistical, and asymptotical properties of multivariate data. IEEE Transactions on Systems, Man, and Cybernetics, Part C (Applications and Reviews). 1998 Feb;28(1):39-54.
	
	\bibitem{2-4}
	Van Loan CF, Pitsianis N. Approximation with Kronecker products. In Linear algebra for large scale and real-time applications 1993 (pp. 293-314). Springer, Dordrecht.
	
	\bibitem{3-2}
	Rezghi M, Hosseini SM, Elden L. Best Kronecker product approximation of the blurring operator in three dimensional image restoration problems. SIAM Journal on Matrix Analysis and Applications. 2014 Aug 19;35(3):1086-104.
	
	
	\bibitem{3-1}
	Wang X, Zhang W, Yan J, Yuan X, Zha H. On the flexibility of block coordinate descent for large-scale optimization. Neurocomputing. 2018 Jan 10;272:471-80.
	
	\bibitem{2-5}
	Lawson CL, Hanson RJ. Solving least squares problems. Siam; 1995 Dec 1.
	
	\bibitem{2-6}
	Petersen KB, Pedersen MS. The Matrix Cookbook, Version: November 15.
	
	\bibitem{3-6}
	Ye J. Generalized low rank approximations of matrices. Machine Learning. 2005 Nov 1;61(1-3):167-91.
	
	\bibitem{bish}
	Guo, Y., T. Hastie, and R. Tibshirani. Regularized linear discriminant analysis and its application in microarrays. Biostatistics, Vol. 8, No. 1, pp. 86–100, 2007.
\end{thebibliography}
\end{document}